\theoremstyle{plain}
\newtheorem{theorem}{Theorem}
\theoremstyle{plain}
\newtheorem{proposition}{Proposition}
\theoremstyle{plain}
\newtheorem{lemma}{Lemma}
\theoremstyle{plain}
\newtheorem{corollary}{Corollary}
\theoremstyle{definition}
\theoremstyle{definition}
\newtheorem{definition}{Definition}
\theoremstyle{remark}
\newtheorem{remark}{Remark}
\Crefname{equation}{Equation}{Eqs.}
\newcommand{\ubar}[1]{\underaccent{\bar}{#1}}
\newcommand{\qvec}[1]{\left[ \begin{array}{c} 0 \\ #1 \end{array} \right]}
\newcommand{\quat}[2]{\left[\begin{array}{c} {#1} \\ {#2} \end{array} \right]}
\title{\LARGE \bf
A Contracting Hierarchical Observer for Pose-Inertial Fusion
}
\author{Brett T. Lopez
\thanks{Verifiable and Control-Theoretic Robotics Laboratory, University of California, Los Angeles, Los Angeles CA, {\tt\small btlopez@ucla.edu}}
}
\begin{document}

\maketitle
\thispagestyle{empty}
\pagestyle{empty}


\begin{abstract}
This work presents a contracting hierarchical observer that fuses position and orientation measurements with an IMU to generate smooth position, linear velocity, orientation, and IMU bias estimates that are guaranteed to converge to their true values. 
The proposed approach is composed of two contracting observers.
The first is a quaternion-based orientation observer that also estimates gyroscope bias.
The output of the orientation observer serves as an input for another contracting observer that estimates position, linear velocity, and accelerometer bias thus forming a hierarchy.
We show that the proposed observer guarantees all state estimates converge to their true values. 
Simulation results confirm the theoretical performance guarantees.
\end{abstract}


\section{INTRODUCTION}
\label{sec:intro}

Many mobile robot platforms utilize a software architecture that consists of two state estimation schemes: one focused on position and orientation (pose) estimation accuracy and the other on generating smooth estimates for control and motion planning.
In practice, this decoupled architecture entails having an upstream vision or LiDAR odometry algorithm, which may or may not utilize inertial measurements, that generates an accurate pose estimate.
The pose generated by this upstream module is then combined with high-rate IMU measurements via a downstream fuser such an extended Kalman filter (EKF) or pose graph optimization (PGO) to generate a state estimate suitable for feedback control.
Despite the widespread use of this architecture, little focus is placed on the performance / convergence properties of the downstream fuser despite its critical role for control.
This article will present a pose-inertial measurement fusion scheme, rooted in contraction theory, that is \emph{globally exponentially convergent} --- a key property used to establish theoretical and practical performance guarantees for real-world sensor fusion on mobile robots.

The main challenge of developing a pose-inertial fuser with convergence and robustness guarantees is the nonlinearities that arise from 1) the orientation dynamics and 2) the coupling between orientation and translation of body-mounted inertial sensors.
The EKF, and its variants like the error-state EKF \cite{roumeliotis1999circumventing,sola2017quaternion}, is the most popular scheme for handling these types of nonlinearities via linearization.
However, the linearization process introduces numerical instabilities that require modifying the baseline algorithm in addition to extensive parameter tuning to get adequate performance, let alone convergence guarantees.
More recently, pose graph optimization \cite{dellaert2012factor,dellaert2021factor} has become a computationally tractable and more accurate alternative to filtering-based approaches, but requires a good initial guess for the nonlinear solver to converge to a reasonable estimate.
Moreover, discontinuous estimates can occur when a PGO is used to generate a real-time estimates as the solver may jump to a new local minimum given new sensor measurements.

The main contribution of this work is the development and analysis of a contracting hierarchical observer that guarantees convergence of position, linear velocity, orientation, and IMU bias estimates to their true values when position and orientation measurements are available, e.g., from an upstream algorithm. 
The proposed observer consists of a quaternion-based observer whose estimate is an input to another observer that estimates position, linear velocity, and accelerometer bias.
The orientation observer captures the underlying topology of the Lie group formed by quaternions and systematically addresses the unwinding phenomenon \cite{mayhew2011quaternion} that arises from the double cover property of quaternions.
The translation observer relies on the translational states being uniformly observable for any orientation --- a property that greatly simplifies observer design since uniform observability guarantees the existence of a contracting observer and also provides constructive conditions for the observer gain.
Despite the hierarchical structure, the proposed approach possesses strong convergence and robustness properties with minimal computation complexity making it ideal for resource-constrained systems that need smooth but accurate state estimates for control or motion planning. 
Simulation results confirm the theoretical convergence properties of the proposed approach.

\textit{Notation:} Symmetric positive definite $n\times n$ matrices are denoted as $\mathcal{S}^n$.
Positive and strictly-positive scalars are designated as $\mathbb{R}_+$ and $\mathbb{R}_{>0}$ respectively. 
The $n$-dimensional identity matrix is represented as $I_n$.
If $a,~b \in \mathbb{R}^3$ then the operator $[\cdot]_\times: \mathbb{R}^3 \rightarrow \mathbb{R}^{3\times 3}$ is the skew-symmetric matrix where $ a \times b = [a]_\times b$.
The matrix version of the Lie derivative is expressed as $L_A C = \dot{C} + C A$ where $C \in \mathbb{R}^{p \times n}$ and $A \in \mathbb{R}^{n\times n}$.
The standard notation for higher-order derivatives of the Lie derivative still hold, i.e., $L^2_A C = \tfrac{d}{dt}({L_A C}) + L_A C\,  A$ and so on.


\section{PROBLEM FORMULATION \&  BACKGROUND}
\label{sec:formulation}
This work is concerned with estimating the position $p\in\mathbb{R}^3$, linear velocity $v \in \mathbb{R}^3$, orientation (represented as a unit quaternion, see \nameref{sec:appendix}) $q \in \mathbb{S}^3$, and inertial measurement unit (IMU) bias of a mobile system where its position and orientation in an inertial coordinate frame are available.
The IMU provides high-rate body-fixed translational acceleration $a_m \in \mathbb{R}^3$ and angular velocity $\omega_m \in \mathbb{R}^3$ measurements corrupted by static accelerometer bias $\prescript{a}{}{b} \in \mathbb{R}^3$ and gyroscope bias $\prescript{g}{}{b} \in \mathbb{R}^3$.
The nonlinear rigid body kinematics under consideration are
\begin{equation}
\label{eq:system}
    \begin{aligned}
        \dot{p} & = v \\
        \dot{v} & = R(q)[a_m - \prescript{a}{}{b}] - g \\
        \prescript{a}{}{}\dot{b} & = 0 \\
        \dot{q} & = \tfrac{1}{2} q \otimes \qvec{\omega_m - \prescript{g}{}b} \\
        \prescript{g}{}{}\dot{b} & = 0
    \end{aligned}
\end{equation}
where $R \in \mathbb{SO}(3)$ is the rotation matrix formed by the quaternion $q$ and $g \in \mathbb{R}^3$ is the gravity vector.
\cref{eq:system} is a very common model used in aerospace and robotics to describe the translation and rotation of a mobile platform, e.g., multirotor or wheeled/tracked robot, equipped with an IMU.
In practice, \cref{eq:system} is numerically integrated with every new IMU measurement to generate a high-rate state estimate that can be used in feedback or motion planning.
However, it is well-known that pure integration of \cref{eq:system} will result in drift, which can be considerable, so external measurements from a camera or LiDAR are necessary to anchor the state estimate near its true value. 

The primary objective of this work is to develop an estimation methodology that generates smooth estimates that converge to their true values for each state in \cref{eq:system} where position and orientation measurements are available via an upstream algorithm focused on generating an accurate pose.
As stated in the Introduction, this upstream / downstream estimation architecture is common in aerospace and robotics since generating an accurate pose estimate is often at odds with generating a smooth estimate suitable for control.
A system diagram of the scenario of interest is shown in \cref{fig:system-diagram} where the smooth estimates generated by our approach can be used for control, motion planning, or even as a prior to warm start the upstream pose estimation algorithm \cite{chen2022direct}.
The proposed strategy will leverage the hierarchical structure of \cref{eq:system}, namely that the orientation kinematics (including the gyroscope bias) can be decoupled from the translation kinematics (including the accelerometer bias), i.e., the last two equations of \cref{eq:system} are independent of the first three equations of \cref{eq:system}.

\begin{figure}[t!]
     \centering
     \includegraphics[width=0.9\columnwidth]{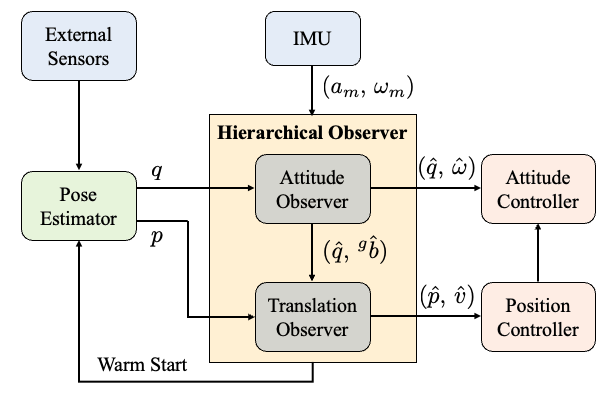}
     \caption{General system architecture where the proposed hierarchical observer fuses pose and IMU measurements to generate a high-rate state estimate suitable for feedback control.}
     \label{fig:system-diagram}
     \vskip -0.2in
\end{figure}

Central to our approach is contraction analysis \cite{lohmiller1998contraction,wang2005partial,manchester2017control}, an alternative to Lyapunov stability analysis that has led to several foundational results in dynamical systems theory, nonlinear control, adaptive control, motion planning, and learning, to name a few. 
Fundamentally, the distinguishing characteristic of a contracting system is that the distance between any two arbitrary system trajectory will exponentially shrink to zero.
This property is extremely useful in many situations since it is often desirable for trajectories to converge to each other, e.g., trajectory tracking or estimation, rather than to an equilibrium point.
A necessary and sufficient condition for a dynamical system $\dot{x} = f(x,t)$ with $x \in \mathbb{R}^n$ and $f: \mathbb{R}^n \times \mathbb{R} \rightarrow \mathbb{R}^n$ to be contracting is the existence of a metric $M: \mathbb{R}^n \times \mathbb{R} \rightarrow \mathcal{S}^n$ such that
\begin{equation}
    \label{eq:contraction}
    \frac{\partial f}{\partial x}^\top M + M \frac{\partial f}{\partial x} + \dot{M} \preceq - 2 \lambda M,
\end{equation}
where $\ubar{\alpha} I_n \preceq M \preceq \bar{\alpha} I_n$, $\dot{M} = \sum_{i=1}^n \partial M / \partial x_i \, \dot{x}_i + \partial M / \partial t$, and $\lambda \in \mathbb{R}_{>0}$ is the contraction (convergence) rate.
If \cref{eq:contraction} is satisfied then for any two arbitrary $x_1(t)$ and $x_2(t)$ they satisfy $\|x_2(t)-x_1(t)\| \leq \sqrt{\nicefrac{\bar{\alpha}}{\ubar{\alpha}}} \|x_2(0)-x_1(0)\| e^{-\lambda t}$ which yields $\|x_2(t)-x_1(t)\| \rightarrow 0$ exponentially at rate $\lambda$ \cite{lohmiller1998contraction}.
We will make use of \cref{eq:contraction} and the properties it implies when developing and analyzing the proposed hierarchical observer.

\section{CONTRACTING HIERARCHICAL OBSERVER}
\label{sec:observer}

\subsection{Overview}
Careful inspection reveals that \cref{eq:system} is a hierarchical system where the orientation kinematics are decoupled from the translation kinematics.
The hierarchical structure facilitates the design of a nonlinear observer that possesses strong convergence and robustness guarantees.
This section will first presents a nonlinear orientation observer that guarantees the quaternion and gyroscope bias estimates will converge to their true values.
Further, it will be shown that the orientation observer actually yields contracting dynamics for the error quaternion.
Then, making use of the strong convergence properties of the orientation observer, a contracting observer that estimates the position, linear velocity, and accelerometer bias will be proposed.
Finally, the convergence of the combined hierarchical observer will be established.

\subsection{Orientation Observer}
\label{sub:quaternion}

Quaternion estimation presents several unique challenges.
Conceptually, unit quaternions form a Lie group that lives on the three sphere $\mathbb{S}^3$ so any estimation scheme must properly account for the underlying topology of $\mathbb{S}^3$.
In other words, the quaternion estimation error must be formed by the quaternion product (denoted as $\otimes$) which is the operator of their Lie group.
Additionally, quaternions double cover $\mathbb{SO}(3)$ so $q$ and $-q$ represent the same orientation.
This can lead to the so-called unwinding phenomenon where the quaternion estimate takes longer than necessary to converge to its true value.
The proposed orientation observer addresses all the aforementioned points.
The following theorem presents the orientation observer and proves it convergence guarantees.

\begin{theorem}
\label{thm:att-obs}
    Let the true quaternion be $q$ and the error quaternion be $q_e \triangleq \hat{q}^* \otimes q$.
    The quaternion and gyroscope bias estimates $\hat{q}$ and $\prescript{g}{}{}\hat{b}$ will globally converge to their true values with the geometric orientation observer
    \begin{subequations}
    \label{eq:geo-att}
        \begin{align}
            \dot{\hat{q}} & = \tfrac{1}{2} \hat{q} \otimes \left( \qvec{\omega_m - \prescript{g}{}{}\hat{b}} + 2\,c_1 \left[ \begin{array}{c} 1 - |q_e^\circ| \\ \mathrm{sgn}(q_e^\circ) \, \vec{q}_e  \end{array} \right] \right) \label{eq:geo-q}  \\
            \prescript{g}{}{}\dot{\hat{b}} &= - c_2 \, q_e^\circ \, \vec{q}_e, \label{eq:geo-bg}
        \end{align}
    \end{subequations}
    where $c_1,~c_2 \in \mathbb{R}_{>0}$ and $\omega_m$ is the corrupted angular velocity measurements from an IMU. 
\end{theorem}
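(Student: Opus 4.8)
The plan is to move to the error coordinates $(q_e,\tilde b)$ with $\tilde b \triangleq {}^{g}b - {}^{g}\hat b$, derive the closed-loop error dynamics, and then (i) prove global convergence with a Lyapunov/Barbalat argument and (ii) exhibit the contraction of the quaternion error anticipated in the preceding remark. First I would differentiate $q_e = \hat q^{*}\otimes q$ via $\dot q_e = \dot{\hat q}^{*}\otimes q + \hat q^{*}\otimes\dot q$, using that from \cref{eq:geo-q} one has $\hat q^{*}\otimes\dot{\hat q} = \tfrac12(\qvec{\omega_m-{}^{g}\hat b}+\kappa)$, where $\kappa$ denotes the bracketed correction (including the factor $2c_1$). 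Writing $\omega \triangleq \omega_m - {}^{g}b$ for the true rate, substituting $\omega_m - {}^{g}\hat b = \omega + \tilde b$, applying the identity $q_e\otimes\qvec{\omega}-\qvec{\omega}\otimes q_e = \qvec{\vec q_e\times\omega}$ to recombine the measurement terms, and expanding $\kappa\otimes q_e$ on $\mathbb S^{3}$ (so $|q_e|\equiv 1$), the error system reduces to the schematic form $\dot{\vec q}_e = \vec q_e\times\omega - c_1\vec q_e - \tfrac12(q_e^{\circ}\tilde b + \tilde b\times\vec q_e)$, $\dot q_e^{\circ} = c_1\,\mathrm{sgn}(q_e^{\circ})(1-|q_e^{\circ}|) + \tfrac12\,\tilde b\cdot\vec q_e$, $\dot{\tilde b} = c_2\,q_e^{\circ}\vec q_e$. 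The three facts to extract are: the true rate enters the $\vec q_e$ equation only through the orthogonal, metric-neutral term $\vec q_e\times\omega$; the correction contributes the pure dissipation $-c_1\vec q_e$; and the $\mathrm{sgn}(q_e^{\circ})$ and $(1-|q_e^{\circ}|)$ slots of $\kappa$ render the antipodal equator $\{q_e^{\circ}=0\}$ repelling (the anti-unwinding mechanism), with the $q_e^{\circ}$ factor in \cref{eq:geo-bg} chosen to be consistent with that convention.

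For global convergence I would take a storage function of the form $V = c_2\,(1-|q_e^{\circ}|) + \tfrac12|\tilde b|^{2}$, or, for a smooth alternative, $V = c_2|\vec q_e|^{2}+\tfrac12|\tilde b|^{2}$ (note $|\vec q_e|^{2} = 1-(q_e^{\circ})^{2}$ on $\mathbb S^{3}$). Both are nonnegative, radially unbounded in $\tilde b$ over the compact state space $\mathbb S^{3}\times\mathbb R^{3}$, and vanish exactly on the target set $\{q_e = \pm\mathbf 1,\ \tilde b = 0\}$ (with $\mathbf 1$ the identity quaternion), which is the double-cover-consistent notion of ``converged.'' Differentiating along the error dynamics, the indefinite cross-terms coupling $\tilde b$ and $\vec q_e$ cancel precisely because the bias law carries the $q_e^{\circ}$ weight, leaving $\dot V \le -\gamma\,|\vec q_e|^{2}\le 0$ with $\gamma\propto c_1 c_2$. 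Boundedness of all signals follows, and with $\omega$ bounded so that $\dot V$ is uniformly continuous, Barbalat's lemma gives $\vec q_e\to 0$, hence $q_e^{\circ}\to\pm 1$. Substituting $\vec q_e\to 0$ back into the $\vec q_e$-equation, every term vanishes except $-\tfrac12 q_e^{\circ}\tilde b$, so uniform continuity of $\dot{\vec q}_e$ forces $q_e^{\circ}\tilde b\to 0$, and since $|q_e^{\circ}|\to 1$ this yields $\tilde b\to 0$ --- a cascade argument that needs no persistency-of-excitation hypothesis. Since the equator is repelling, the estimate settles on the nearer of $\pm\mathbf 1$, so $\hat q\to q$ (equivalently $R(\hat q)\to R(q)$) without unwinding and ${}^{g}\hat b\to{}^{g}b$.

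I would then record the stronger contraction statement. Fixing a chart $\mathrm{sgn}(q_e^{\circ})=\pm 1$ and treating $\tilde b$ as an exogenous (and, by the above, vanishing) input, the Jacobian of the $\vec q_e$-dynamics with respect to $\vec q_e$ is $-[\omega]_{\times}-c_1 I_3$ up to a negative-semidefinite term induced by the $|q_e|=1$ constraint; its symmetric part is $-c_1 I_3$, so the contraction condition \cref{eq:contraction} holds with the constant metric $M = I_3$ and rate $\lambda = c_1$, uniformly in the time-varying angular velocity. Composing the contracting $\vec q_e$-subsystem with the convergence $\tilde b\to 0$ through the hierarchical-combination property of contracting systems then gives exponential convergence of the orientation estimate with explicit rate $c_1$.

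The step I expect to be the main obstacle is the non-smoothness that $\mathrm{sgn}(q_e^{\circ})$ introduces on $\{q_e^{\circ}=0\}$: one must argue well-posedness there (Filippov solutions, using that the flow points away from this set on both sides, so that apart from a measure-zero set of initial conditions it is never entered and no sliding occurs) and check that the Lyapunov and contraction estimates are unaffected --- which is why letting $V$ depend on $\vec q_e$ rather than on $\mathrm{sgn}(q_e^{\circ})$ is convenient. A second, more routine, point is making $\tilde b\to 0$ rigorous: the Barbalat/cascade argument needs boundedness of $\omega = \omega_m-{}^{g}b$ (and enough regularity for $\dot{\vec q}_e$ to be uniformly continuous) to close without an excitation condition, and one must also confirm that the unit-norm constraint on $\hat q$ is genuinely maintained by the flow (or re-imposed), since the whole reduction rests on $|q_e|\equiv 1$.
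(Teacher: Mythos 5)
Your proposal follows essentially the same route as the paper: derive the error-quaternion kinematics, take the quadratic storage function in $(\vec{q}_e,\tilde{b})$ whose cross term is cancelled by the $q_e^\circ\vec{q}_e$ bias-update law, obtain $\dot{V}\propto-\|\vec{q}_e\|^2$, invoke Barbalat's lemma for $\vec{q}_e\to\vec{0}$, and then extract $\tilde{b}\to\vec{0}$ from the residual $-\tfrac{1}{2}q_e^\circ\tilde{b}$ term in the $\vec{q}_e$-dynamics (your second Barbalat/uniform-continuity phrasing of this last step is in fact a more careful version of the paper's invariant-set argument). The contraction claim and the $\mathrm{sgn}(q_e^\circ)$ well-posedness caveat you add are handled separately in the paper (Theorem~2 and a remark), so they are welcome but not part of this proof.
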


\begin{proof}
From the definition of the error quaternion we can derive the error quaternion kinematics
\begin{equation}
\label{eq:qerr_kin}
    \begin{aligned}
        \dot{q}_e = & ~ \hat{q}^* \otimes \dot{q} - \hat{q}^* \otimes \dot{\hat{q}} \otimes q_e \\
                  = & ~ \tfrac{1}{2} q_e \otimes \qvec{\omega_m - \prescript{g}{}b} - \tfrac{1}{2} \qvec{\omega_m -\prescript{g}{}{}\hat{b}} \otimes q_e \\
                  & ~ - c_1 \quat{1-|q_e^\circ|}{\mathrm{sgn}(q_e^\circ)\,\vec{q}_e} \otimes q_e ,
    \end{aligned}
\end{equation}
where the geometric orientation observer was used in place of $\dot{\hat{q}}$.
Noting that 
\begin{equation*}
    \quat{1-|q_e^\circ|}{\mathrm{sgn}(q_e^\circ)\,\vec{q}_e} \otimes q_e = \quat{ (1-|q_e^\circ|) q_e^\circ - \mathrm{sgn}(q_e^\circ)\, \vec{q}_e^\top \vec{q}_e}{\vec{q}_e},
\end{equation*}
and defining the gyroscope bias estimate error to be $\prescript{g}{}b_{e} \triangleq \prescript{g}{}b - \prescript{g}{}{}\hat{b}$, then the vector part of \cref{eq:qerr_kin} is 
\begin{equation}
\label{eq:qerr_vec_kin}
    \dot{\vec{q}}_e = \vec{q}_e \times \omega_m - \tfrac{1}{2} q_e^\circ \prescript{g}{}b_e - \vec{q}_e \times (\prescript{g}{}b + \prescript{g}{}{}\hat{b}) - c_1 \, \vec{q}_e.
\end{equation}
Now consider the Lyapunov function
\begin{equation*}
    V(\vec{q}_e,\prescript{g}{}b_e) = \|\vec{q}_e\|^2 + \tfrac{1}{2 c_2} \|\prescript{g}{}b_e\|^2.
\end{equation*}
Differentiating and using \cref{eq:qerr_vec_kin},
\begin{equation*}
    \begin{aligned}
        \dot{V} = & ~ 2 \vec{q}_e^\top \Big( \vec{q}_e \times \omega_m - \tfrac{1}{2} q_e^\circ \prescript{g}{}b_e - \vec{q}_e \times (\prescript{g}{}b + \prescript{g}{}{}\hat{b}) - c_1 \, \vec{q}_e \Big)  \\
        & ~ - \tfrac{1}{c_2} \prescript{g}{}b_e^\top \prescript{g}{}{}\dot{\hat{b}} \\
        = & ~ - 2 \, c_1 \, \|\vec{q}_e\|^2 - \prescript{g}{}b_e^\top \left( \tfrac{1}{c_2} \prescript{g}{}{}\dot{\hat{b}} + q_e^\circ \vec{q}_e \right).
    \end{aligned}
\end{equation*}
Using \cref{eq:geo-bg} then yields $\dot{V} (\vec{q}_e,\prescript{g}{}b_e) = -2 \, c_1\, \|\vec{q}_e\|^2 \leq 0$ which shows that $\vec{q}_e$ and $\prescript{g}{}b_e$ are bounded.
However, $\dot{V}(\vec{q}_e,\prescript{g}{}b_e)$ is only negative semidefinite since $\dot{V} (\vec{0},\prescript{g}{}b_e) = 0$ for any $\prescript{g}{}b_e$. 
Noting that \cref{eq:qerr_vec_kin} is a non-autonomous system due to the presence of $\omega_m$, we can establish convergence via Barbalat's lemma.
Letting $W(\vec{q}_e) = \|\vec{q}_e\|^2$, integrating $\dot{V}(\vec{q}_e,\prescript{g}{}b_e)$ yields $2\,c_1\,\int_0^\infty W(\vec{q}_e(\tau))\,d\tau = V(\vec{q}_e(0),\prescript{g}{}b_e(0)) < \infty$. 
Moreover, $\dot{W}(\vec{q}_e) = -q_e^\circ \vec{q}_e^\top \prescript{g}{}b_e - 2 c_1 \|\vec{q}_e\|^2$ is bounded since $q_e$ and $\prescript{g}{}b_e$ are bounded (as stated above). 
Then, by Barbalat's lemma,  $W(\vec{q}_e) \rightarrow 0 \implies \vec{q}_e \rightarrow \vec{0}$ as $t\rightarrow+\infty$. 
Further, in order for the set $\mathcal{M} = \{ (\vec{q}_e,\prescript{g}{}b_e) \, : \, \vec{q}_e = \vec{0} \}$ to be invariant, $\prescript{g}{}b_e$ must also be zero since $(\dot{\vec{q}}_e,\,\dot{{b}}^{\,g}_e) = (\vec{0},\,\vec{0}) \iff (\vec{q}_e,\,\prescript{g}{}b_e) = (\vec{0},\,\vec{0})$.
Therefore, $(\vec{q}_e,\,\prescript{g}{}b_e) \rightarrow (\vec{0},\,\vec{0})$ as $t \rightarrow +\infty$ so $\hat{q} \rightarrow q$ and $\prescript{g}{}{}\hat{b} \rightarrow \prescript{g}{}b$ asymptotically\footnote{Technically $\hat{q}$ may converge to $q$ or $-q$ (which ever is closer) but these represent the same orientation due to the double covering property of quaternions so this statement is without loss of generality.}.
Since the above holds for all $\vec{q}_e$ and $\prescript{g}{}b_e$ then convergence is also global.
\end{proof}

\begin{remark}
While the orientation observer \cref{eq:geo-att} contains a discontinuity, it actually ensures 1) the estimate $\hat{q}$ \emph{remains continuous} even if $q$ changes sign and 2) $q_e$ converges to the \emph{closest} equilibrium, namely $q_e = (\,\pm 1, \vec{0}\,)$, thereby directly addressing the unwinding phenomenon.
In fact, one can show that $\mathrm{sgn}(q_e^\circ)\vec{q}_e$ will never change sign even if $q$ changes sign.
Hence, \cref{eq:geo-att} will not exhibit any discontinuous estimates even with the presence of $\mathrm{sgn}(\cdot)$.
\end{remark}

The convergence result in \cref{thm:att-obs} can be further strengthened by noting that the error quaternion dynamics with the observer \cref{eq:geo-att} are contracting.

\begin{theorem}
    \label{thm:qe_contracting}
    The error quaternion dynamics with the observer \cref{eq:geo-att} are contracting with rate $c_1$ in the identity metric.
\end{theorem}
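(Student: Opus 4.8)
The plan is to show that, once the observer \cref{eq:geo-att} is substituted in, the error quaternion kinematics collapse onto a linear time‑varying system in $q_e$ whose Jacobian is a skew‑symmetric matrix plus $-c_1 I_4$; by \cref{eq:contraction} with $M = I_4$ this is exactly contraction at rate $c_1$ in the identity metric. First I would take the error quaternion kinematics \cref{eq:qerr_kin}, substitute the algebraic identity for $\bigl(1-|q_e^\circ|,\ \mathrm{sgn}(q_e^\circ)\vec{q}_e\bigr)\otimes q_e$ already recorded in the proof of \cref{thm:att-obs}, and then invoke the unit‑norm constraint $(q_e^\circ)^2 + \|\vec{q}_e\|^2 = 1$. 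On the hemisphere $q_e^\circ > 0$ the scalar part of that product, $(1-|q_e^\circ|)q_e^\circ - \mathrm{sgn}(q_e^\circ)\|\vec{q}_e\|^2$, collapses to $q_e^\circ - 1$, so the entire observer correction $c_1\,\bigl(1-|q_e^\circ|,\ \mathrm{sgn}(q_e^\circ)\vec{q}_e\bigr)\otimes q_e$ becomes $c_1\,(q_e - q_{\mathrm{id}})$ with $q_{\mathrm{id}} \triangleq (1,\vec{0})$ the identity quaternion. The error dynamics then read $\dot{q}_e = A(t)\,q_e - c_1\,(q_e - q_{\mathrm{id}})$, where $A(t)$ is the time‑varying matrix of the linear map $q_e \mapsto \tfrac{1}{2}\bigl( q_e \otimes \qvec{\omega_m - \prescript{g}{}b} - \qvec{\omega_m - \prescript{g}{}{}\hat{b}} \otimes q_e \bigr)$ and $\omega_m$, $\prescript{g}{}b$, $\prescript{g}{}{}\hat{b}$ are regarded as exogenous signals from the standpoint of the $q_e$‑subsystem. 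The case $q_e^\circ < 0$ is identical after replacing $q_{\mathrm{id}}$ by $-q_{\mathrm{id}}$; by the Remark following \cref{thm:att-obs} the $\mathrm{sgn}(\cdot)$ never switches along a trajectory, so the hemisphere a trajectory starts in is forward invariant and the vector field is smooth along it.

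Next I would note that $A(t)$ is skew‑symmetric for every $t$: left‑ and right‑multiplication by a \emph{pure} quaternion are each skew‑symmetric $4\times4$ linear maps, so $A(t)+A(t)^\top = 0$ \emph{regardless} of the values of $\omega_m$, $\prescript{g}{}b$ and $\prescript{g}{}{}\hat{b}$ — in particular this does not even require the gyroscope bias error to have converged, which is where the hierarchical structure pays off. Hence the Jacobian of the error dynamics is $\partial\dot{q}_e/\partial q_e = A(t) - c_1 I_4$, whose symmetric part is exactly $-c_1 I_4$. Taking $M = I_4$ in \cref{eq:contraction} (so $\dot{M} = 0$ and $\ubar{\alpha} = \bar{\alpha} = 1$) gives $\bigl(A(t)-c_1 I_4\bigr)^\top + \bigl(A(t)-c_1 I_4\bigr) = -2c_1 I_4 \preceq -2c_1 I_4$, i.e. \cref{eq:contraction} holds with $\lambda = c_1$, which is the asserted contraction at rate $c_1$ in the identity metric.

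The step I expect to be most delicate is the reduction in the first paragraph: simplifying the scalar part of the correction uses the unit‑norm constraint, so to be rigorous the Jacobian should be read on the tangent space $T_{q_e}\mathbb{S}^3$, on which the differential of the original correction and that of $c_1(q_e - q_{\mathrm{id}})$ coincide; equivalently one may take the ``cleaned'' affine field as the definition of the error dynamics on $\mathbb{S}^3$, since the two generate the same flow there. I would also make explicit that it is precisely the skew‑symmetry of $A(t)$ — rather than any state dependence that would otherwise have to be absorbed into a nontrivial $M$ — that lets the \emph{identity} metric work, and remark that the same conclusion follows from the vector‑part kinematics \cref{eq:qerr_vec_kin} on treating $-\tfrac{1}{2}q_e^\circ\,\prescript{g}{}b_e$ as an exogenous perturbation, since $\partial\dot{\vec{q}}_e/\partial\vec{q}_e = -[\omega_m]_\times + [\prescript{g}{}b + \prescript{g}{}{}\hat{b}]_\times - c_1 I_3$ is again skew plus $-c_1 I_3$.
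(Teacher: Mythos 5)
Your proof is correct and rests on exactly the same mechanism as the paper's: the quaternion-product terms contribute a skew-symmetric Jacobian, the $c_1$ correction contributes $-c_1 I$, and the gyroscope bias error is treated as an exogenous input, so the symmetric part of the Jacobian is $-2c_1 I$ and \cref{eq:contraction} holds with the identity metric and $\lambda = c_1$. The only difference is packaging --- the paper reduces to the three-dimensional vector part $\vec{q}_e$ via the unit-norm constraint and reads off $J + J^\top = -2c_1 I_3$ from \cref{eq:qerr_matrix}, which is precisely the argument in your closing remark, while your primary four-dimensional presentation (and the extra care about reading the Jacobian on $T_{q_e}\mathbb{S}^3$) is a slightly more explicit rendering of the same idea.
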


\begin{proof}
    It is sufficient to show that the vector part of the error quaternion dynamics are contracting because $q_e$ must satisfy $q_e^\circ(t)^2 + \|\vec{q}_e(t)\|^2 = 1$ for all $t$ so if $\|\vec{q}_e\| \rightarrow 0$ at rate $c_1$ then $|q_e^\circ| \rightarrow 1$ at the same rate.
    The vector part of the error quaternion dynamics in \cref{eq:qerr_vec_kin} can be rewritten as
    \begin{equation}
        \label{eq:qerr_matrix}
        \dot{\vec{q}}_e = [\omega_m + \prescript{g}{}b + \prescript{g}{}{}\hat{b}]_\times \vec{q}_e - c_1 \, \vec{q}_e - \tfrac{1}{2} q_e^\circ \prescript{g}{}b_e,
    \end{equation}
    where $[\cdot]_\times$ is a skew-symmetric matrix and $\tfrac{1}{2} q_e^\circ \prescript{g}{}b_e$ is an exogenous input.
    Let $J$ be the Jacobian of \cref{eq:qerr_matrix} with respect to $\vec{q}_e$. 
    Using the property $[\cdot]_\times^\top = -[\cdot]_\times$, the symmetric part of the Jacobian satisfies $J + J^\top = -2c_1 \, I_3$ which is equivalent to the contraction condition \cref{eq:contraction} where the metric is identity and rate of contraction is $c_1$.
\end{proof}

An immediate consequence of \cref{thm:qe_contracting} is that the quaternion estimate $\hat{q}$ will exponentially converge to a region near $q$ if the gyroscope bias is bounded.

\begin{corollary}
Let the gyroscope bias belong to a closed bounded set $\prescript{g}{}{\mathcal{B}}$. 
The quaternion estimate $\hat{q}$ will globally converge exponentially with rate $c_1$ to a region near the true quaternion $q$ with the orientation observer \cref{eq:geo-att}.
\end{corollary}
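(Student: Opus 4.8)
The plan is to read the error quaternion dynamics as a contracting system perturbed by a bounded exogenous input and then invoke the standard robustness property of contracting systems. By \cref{thm:qe_contracting}, the vector-part dynamics \cref{eq:qerr_matrix} are contracting in the identity metric at rate $c_1$ once $d(t) \triangleq -\tfrac{1}{2} q_e^\circ\, \prescript{g}{}b_e$ is treated as an exogenous input, and $\vec{q}_e \equiv \vec{0}$ is an exact trajectory of the input-free dynamics. The three steps are therefore: (i) bound $\|d(t)\|$ uniformly in $t$; (ii) use the contraction property to show every trajectory of \cref{eq:qerr_matrix} converges exponentially at rate $c_1$ to a tube around $\vec{q}_e = \vec{0}$ whose radius is controlled by $\sup_t \|d(t)\|$; (iii) translate this back into a statement about $\hat{q}$ via the unit-norm constraint.

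For step (i): $|q_e^\circ(t)| \le 1$ for all $t$ since $q_e \in \mathbb{S}^3$. By hypothesis $\prescript{g}{}b$ lies in the closed bounded set $\prescript{g}{}{\mathcal{B}}$, and from the Lyapunov argument in the proof of \cref{thm:att-obs} the function $V(\vec{q}_e,\prescript{g}{}b_e)$ is nonincreasing, so $\prescript{g}{}b_e$ stays bounded; write $\|\prescript{g}{}b_e(t)\| \le \beta$ for all $t \ge 0$, where $\beta$ depends only on the initial condition and $\prescript{g}{}{\mathcal{B}}$. Hence $\sup_{t \ge 0}\|d(t)\| \le \tfrac{1}{2}\beta < \infty$.

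For steps (ii)–(iii): with the identity metric the incremental Lyapunov function $\|\vec{q}_e\|$ satisfies $\tfrac{d}{dt}\|\vec{q}_e\| \le -c_1 \|\vec{q}_e\| + \|d(t)\|$, obtained by comparing the actual trajectory against the zero trajectory of the input-free dynamics and using $J + J^\top = -2c_1 I_3$ from \cref{thm:qe_contracting}; integrating gives
\[
\|\vec{q}_e(t)\| \le e^{-c_1 t}\,\|\vec{q}_e(0)\| + \frac{1}{c_1}\sup_{0 \le s \le t}\|d(s)\| \le e^{-c_1 t}\,\|\vec{q}_e(0)\| + \frac{\beta}{2 c_1}.
\]
Since this holds for every initial condition, $\|\vec{q}_e\|$ globally converges exponentially at rate $c_1$ to the ball of radius $\beta/(2c_1)$. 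Finally, because $q_e = \hat{q}^* \otimes q$ and $q_e^\circ(t)^2 + \|\vec{q}_e(t)\|^2 = 1$, a small $\|\vec{q}_e\|$ forces $|q_e^\circ|$ near $1$, i.e. $\hat{q}$ near $q$ (equivalently $-q$, the same orientation), so $\hat{q}$ converges exponentially at rate $c_1$ to a region near $q$ whose size shrinks with $\prescript{g}{}{\mathcal{B}}$ and collapses to a point as $\prescript{g}{}b_e \to \vec{0}$, consistent with \cref{thm:att-obs}.

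The main obstacle is making the perturbation step fully rigorous: deriving $\tfrac{d}{dt}\|\vec{q}_e\| \le -c_1\|\vec{q}_e\| + \|d\|$ from the contraction condition when the Jacobian of \cref{eq:qerr_matrix} is time-varying (through $\omega_m$ and $\prescript{g}{}{\hat b}$) and when $\|\cdot\|$ is non-differentiable at the origin, which calls for an upper Dini derivative argument; and stating precisely what ``region near $q$'' means, i.e. relating the Euclidean bound on $\vec{q}_e$ to a geodesic/quaternion distance between $\hat{q}$ and $q$.
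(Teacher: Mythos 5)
Your proposal is correct and follows essentially the same route as the paper: treat the gyroscope-bias term as a bounded exogenous input to the contracting $\vec{q}_e$ dynamics, derive $\tfrac{d}{dt}\|\vec{q}_e\| \leq -c_1\|\vec{q}_e\| + \tfrac{1}{2}\bar{\beta}$, and integrate the scalar comparison inequality to get exponential convergence to a ball of radius $\bar{\beta}/(2c_1)$. The only (minor, and arguably favorable) differences are that you justify boundedness of $\prescript{g}{}b_e$ via the Lyapunov function of \cref{thm:att-obs} rather than assuming it directly, and you explicitly flag the Dini-derivative subtlety at $\vec{q}_e = \vec{0}$ that the paper glosses over.
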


\begin{proof}
Let $\prescript{g}{}{\mathcal{B}} \triangleq \left\{ \prescript{g}{}b_e \in \mathbb{R}^3 \, : \, \|\prescript{g}{}b_e\| \leq \prescript{g}{}{}\bar{b}_e < \infty  \right\}$. 
Since the dynamics for $\vec{q}_e$ are contracting in the identity metric at rate $c_1$, it follows that
\begin{equation*}
\begin{aligned}
    \tfrac{d}{dt}\|\vec{q}_e \|^2 = &  - 2 \, c_1 \, \|\vec{q}_e \|^2 + q_e^\circ \vec{q}_e^\top \prescript{g}{}b_e \\ 
    \leq &  -2 \, c_1 \, \|\vec{q}_e \|^2 + |q_e^\circ| \, \|\vec{q}_e\| \, \prescript{g}{}{}\bar{b}_e \\
    \leq & -2 \, c_1 \, \|\vec{q}_e \|^2 + \, \|\vec{q}_e\| \, \prescript{g}{}{}\bar{b}_e,
\end{aligned}
\end{equation*}
where the last inequality uses the identity $\|q\| = 1 \implies |q^\circ| \leq 1$.
The above expression can be further simplified to $\tfrac{d}{dt} \|\vec{q}_e\| \leq - c_1 \|\vec{q}_e\| + \tfrac{1}{2} \prescript{g}{}{}\bar{b}_e$ which has the analytic solution $\|\vec{q}_e(t)\| \leq \|\vec{q}_e(0)\| e^{-c_1 t} + \tfrac{1}{2c_1} \prescript{g}{}{}\bar{b}_e (1-e^{-c_1 t}).$
Hence, $\|\vec{q}_e(t)\|$ converges exponentially with rate $c_1$ to $\tfrac{1}{2c_1} \prescript{g}{}{}\bar{b}_e$.
Since $q_e = \hat{q}^* \otimes q$ then $\hat{q}$ must also converge exponentially with rate $c_1$ to a region near the true quaternion $q$.
\end{proof}

The implications of \cref{thm:att-obs,thm:qe_contracting} are three-fold. 
Firstly, the quaternion estimate $\hat{q}$ will exhibit two phases of convergence, namely, exponential convergence until $\|\vec{q}_e\| \leq \tfrac{1}{2c_1} \prescript{g}{}{}\bar{b}_e$ followed by asymptotic convergence to $q$.
Moreover, after the exponential convergence phase, $\hat{q}$ will \emph{always} be near $q$ even if $\prescript{g}{}{}{b}$ is slowly time-varying due to, e.g., temporal temperature changes so long as $\prescript{g}{}{}{b} \in \prescript{g}{}{\mathcal{B}}$.
This property is also true if other bounded exogenous signals enter the error quaternion dynamics.
It can thus be concluded that the proposed quaternion observer possesses an inherent robustness to external inputs, which is useful both practically and theoretically.
In particular, the inherent robustness of \cref{eq:geo-att} can perhaps be leveraged to improve the performance of the upstream pose estimation algorithm by providing a prior that could, e.g., serve as a warm start for a nonlinear optimization. 
Secondly, one can derive performance bounds --- in the mean squared sense --- for a contracting system when stochastic noise is present,  such as in the angular velocity or quaternion measurements.
A full stochastic analysis of \cref{eq:geo-att} will be conducted in future work.
Thirdly, the strong convergence properties of \cref{eq:geo-att} enables a decoupled estimation strategy for \cref{eq:system} without sacrificing performance. 

\subsection{Translation Observer}
\label{sub:translation}

The strong convergence properties of the proposed orientation observer in \cref{sub:quaternion}, in conjunction with the hierarchical structure of \cref{eq:system}, enable the decoupled estimation of orientation and translation states (including biases).
We will first treat the vehicle's orientation, represented as the rotation matrix $R$ formed from ${q}$, as a known time-varying signal that the translation observer must compensate for.
It will be shown that the proposed translation observer is contracting.
Further, we will prove that the translation estimation error will converge to zero even when the true orientation is replaced by its estimate generated by the orientation observer \cref{eq:geo-att}.
Thus, the overall proposed approach represents a contracting hierarchical observer for estimating orientation and translation.
Before proceeding, we revisit uniform observability and its connection with contraction.
\begin{definition}
\label{def:observability}
A linear time-varying system
\begin{equation}
\label{eq:ltv}
    \begin{aligned}
        \dot{x} &= A(t) x + B(t)v \\
        y &= C(t) x,
    \end{aligned}
\end{equation}
with state $x \in \mathbb{R}^n$, exogenous input $v \in \mathbb{R}^m$, measurement $y\in\mathbb{R}^p$, and time-varying matrices $A: \mathbb{R} \rightarrow \mathbb{R}^{n\times n}$, $B: \mathbb{R} \rightarrow \mathbb{R}^{n\times m}$, $C: \mathbb{R} \rightarrow \mathbb{R}^{p\times n}$ is \emph{uniformly observable} if and only if the observability matrix
\begin{equation}
    \label{eq:unif_observ}
    \mathcal{O}(t) \triangleq \left[ \begin{array}{c} C \\ L_A C \\ L_A^2 C \\ \vdots \\ L_A^{n-1} C
    \end{array} \right],
\end{equation}
is full rank for all $t$.
\end{definition}

Uniform observability is an important property that facilitates observer design. 
A fundamental property of uniformly observable linear systems is the existence of a diffeomorphism $\Upsilon:\mathbb{R} \rightarrow \mathbb{R}^{n\times n}$ so that if $z = \Upsilon(t) x$ then \cref{eq:ltv} can be transformed into observable canonical form \cite{zeitz1984observability}
\begin{equation}
\label{eq:ocf}
    \begin{aligned}
        \dot{z} &= \left[ \begin{array}{cc} 0 & 0 \\ I_{n-p} & 0 \end{array} \right] z - \sigma(t)y + \Upsilon(t)B(t)v \\
        & = A_{{o}} - \sigma(t)y + B_{o}(t)v \\
        y &= [\,0~0~\cdots~I_p\,]\,z = C_{o} \, z,
    \end{aligned}
\end{equation}
where $\sigma : \mathbb{R} \rightarrow \mathbb{R}^{n\times p}$ and the zero elements are of appropriate dimensions.
The main benefit of representing \cref{eq:ltv} in observable canonical form is that the time-varying terms depend solely on \emph{known} quantities, namely, $\sigma(t)$ and $y$.
A suitable observer for \cref{eq:ocf} takes the form
\begin{equation}
    \label{eq:ocf-observer}
    \dot{\hat{z}} = A_o \hat{z} - \sigma(t)y + B_o(t)v + K_{o}  (y-\hat{y})
\end{equation}
where $K_{o} \in \mathbb{R}^{n \times p}$.
If $z_e \triangleq z - \hat{z}$ then the $z_e$ dynamics become the linear \emph{time-invariant} system 
\begin{equation}
\label{eq:ocf-error}
    \dot{z}_e = ( A_{o} - K_{o} C_{o}  ) z_e,
\end{equation}
where standard pole placement or other gain selection techniques for linear time-invariant systems can be employed, making the observer design straightforward.
Furthermore, since $\Upsilon(t)^{-1}$ exists (by definition) then $x_e = \Upsilon(t)^{-1} z_e$ so if $z_e \rightarrow 0$ then $x_e \rightarrow 0$.
Observer \cref{eq:ocf-observer} in $x$-coordinates takes the form $\dot{\hat{x}} = A(t)\hat{x} + B(t) v + K(t)(y-\hat{y})$ with the time-varying gain $K(t) \triangleq \Upsilon(t)^{-1} K_{o}$.

In \cite{lohmiller1998contraction}, uniform observability was shown to be a sufficient condition for contracting estimation error dynamics.
The argument leverages two key properties of uniformly observable systems, namely the guaranteed existence of the diffeomorphism $\Upsilon(t)$ and the freedom to place the poles of the estimation error dynamics anywhere. 
Note that the observer gain $K_{o}$ can be found by instead searching for a positive definite matrix $P \in \mathcal{S}^{n}$ and strictly-positive scalar $\rho \in \mathbb{R}_{>0}$ such that the implication \cite{manchester2014output}
\begin{equation*}
    C_o z_e = 0 \implies z_e^\top \left(A_o^\top P + A_o P + 2 \lambda P\right) z_e \leq 0,
\end{equation*}
 is satisfied. Or equivalently via Finsler's theorem
\begin{equation}
\label{eq:contraction-observer}
    A_o^\top P + A_o P + 2 \lambda P - \rho \, C_o C^\top_o \preceq 0,
\end{equation}
which in essence is a necessary and sufficient condition for the observer \cref{eq:ocf-observer} to be contracting in metric $P$ with rate $\lambda$. 
Once $P$ and $\rho$ are known, the observer gain can be directly computed via $K_o = \tfrac{1}{2} \rho P^{-1} C_o^\top$.
Further noting that $z_e = \Upsilon(t)x_e$, then the $x_e$ dynamics will also be contracting in the metric $M(t) = \Upsilon(t)^\top P \Upsilon(t)$ with the observer gain $K(t) =  \Upsilon(t)^{-1}K_o$ where the inverse of the diffeomorphism $\Upsilon(t)^{-1} = [\, \gamma_1(t)~\gamma_2(t) ~ \cdots ~ \gamma_n(t)\,]$ with column vectors $\gamma_i$ can be found via the recursion \cite{bestle1983canonical}
\begin{equation}
\label{eq:diffeo-recursion}
    \begin{aligned}
        \gamma_1(t) &= \mathcal{O}(t)^{-1} \left[ \begin{array}{c} 0 \\ \vdots \\ I_p
        \end{array} \right] \\
        \gamma_2(t) &= A(t)\gamma_1(t) - \dot{\gamma}_1(t) \\
        & \vdotswithin{=} \\
        \gamma_n(t) &= A(t)\gamma_{n-1}(t) - \dot{\gamma}_{n-1}(t). \\
    \end{aligned}
\end{equation}
Thus, uniform observability not only guarantees the existence of a contracting observer but also provides constructive conditions for a metric $M(t)$ and observer gain $K(t)$.
More formally, this leads to the following lemma.
\begin{lemma}
\label{lemma:contract-observability}
    If the linear time-varying system \cref{eq:ltv} is uniformly observable, then there exists $P \in \mathcal{S}^{n}$ and $\rho \in \mathbb{R}_{>0}$ satisfying \cref{eq:contraction-observer} leading to the observer
    \begin{equation}
        \label{eq:x-contract-observer}
        \dot{\hat{x}} = A(t)\hat{x} + B(t) v + K(t)(y-\hat{y}),
    \end{equation}
    that is contracting at rate $\lambda$ in the metric $M(t) = \Upsilon(t)^\top P \Upsilon(t)$ with gain $K(t) = \tfrac{1}{2} \rho \Upsilon(t)^{-1} P^{-1} \Upsilon(t)^{-\top} C^\top $ where $\Upsilon(t)^{-1}$ is given by \cref{eq:diffeo-recursion}.
\end{lemma}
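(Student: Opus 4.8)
The plan is to move the problem into the observable canonical coordinates of \cref{eq:ocf}, solve a linear time-invariant Lyapunov inequality there, and transport the result back through $\Upsilon(t)$. First, since \cref{eq:ltv} is uniformly observable, \cref{def:observability} guarantees $\mathcal{O}(t)$ is full rank for every $t$, so the recursion \cref{eq:diffeo-recursion} (following \cite{bestle1983canonical,zeitz1984observability}) produces an invertible $\Upsilon(t)^{-1}$, and the change of variables $z = \Upsilon(t)x$ puts \cref{eq:ltv} into the form \cref{eq:ocf} in which the drift $A_o$ and output map $C_o$ are \emph{constant}. I would then record two consequences: (i) in the error coordinate $z_e \triangleq \Upsilon(t)x_e$ the $\sigma(t)y$ and $B_o(t)v$ terms are common to plant and observer and cancel, leaving the genuinely \emph{time-invariant} error system \cref{eq:ocf-error}; and (ii) equating $y = C(t)x$ with $y = C_o z = C_o\Upsilon(t)x$ gives $C(t) = C_o\Upsilon(t)$, hence $C_o^\top = \Upsilon(t)^{-\top}C(t)^\top$, which I will need for the gain formula.

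Second, I would establish feasibility of \cref{eq:contraction-observer}. The pair $(A_o,C_o)$ is observable by construction of the canonical form, so $(A_o^\top,C_o^\top)$ is controllable and the spectrum of $A_o - K_oC_o$ can be placed freely; choosing it strictly to the left of $-\lambda$ and invoking the converse Lyapunov theorem for LTI systems yields $P\in\mathcal{S}^n$ with $(A_o-K_oC_o)^\top P + P(A_o-K_oC_o) \preceq -2\lambda P$. Equivalently, and matching the LMI in the statement, one searches directly for $P\in\mathcal{S}^n$ and $\rho\in\mathbb{R}_{>0}$ such that the implication $C_o z_e = 0 \implies z_e^\top(A_o^\top P + PA_o + 2\lambda P)z_e \le 0$ holds and converts it to \cref{eq:contraction-observer} via Finsler's lemma; observability of $(A_o,C_o)$ is exactly what makes this feasible. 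Fixing $K_o = \tfrac12\rho P^{-1}C_o^\top$ then gives $(A_o-K_oC_o)^\top P + P(A_o-K_oC_o) = A_o^\top P + PA_o - \rho C_o^\top C_o \preceq -2\lambda P$, i.e. $\tfrac{d}{dt}(z_e^\top P z_e) \le -2\lambda\,z_e^\top P z_e$ along \cref{eq:ocf-error}.

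Third comes the pull-back. Because $z_e = \Upsilon(t)x_e$, the quadratic form $x_e^\top M(t)x_e$ with $M(t) \triangleq \Upsilon(t)^\top P\Upsilon(t)$ is identically $z_e^\top Pz_e$, so differentiating along the $x_e$ dynamics of \cref{eq:x-contract-observer} gives $\tfrac{d}{dt}(x_e^\top M x_e) = \tfrac{d}{dt}(z_e^\top P z_e) \le -2\lambda\,x_e^\top M x_e$; this is precisely \cref{eq:contraction} with metric $M(t)$ and rate $\lambda$, and it delivers the correct sign for $\dot M$ for free, without expanding $\dot\Upsilon^\top P\Upsilon + \Upsilon^\top P\dot\Upsilon$. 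For the gain, the discussion preceding the lemma already gives $K(t) = \Upsilon(t)^{-1}K_o$; substituting $K_o = \tfrac12\rho P^{-1}C_o^\top$ and $C_o^\top = \Upsilon(t)^{-\top}C(t)^\top$ yields $K(t) = \tfrac12\rho\,\Upsilon(t)^{-1}P^{-1}\Upsilon(t)^{-\top}C(t)^\top$, as claimed.

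The step I expect to be the real obstacle is making ``contracting'' rigorous in the transported metric: \cref{eq:contraction} also demands the uniform sandwich $\ubar{\alpha}I_n \preceq M(t) \preceq \bar{\alpha}I_n$, which does \emph{not} follow from $\mathcal{O}(t)$ being merely pointwise full rank — it requires a uniform lower bound on the smallest singular value of $\mathcal{O}(t)$ (equivalently, uniform boundedness of $\Upsilon(t)$ and $\Upsilon(t)^{-1}$) together with boundedness of $A(t),C(t)$ so that the $\gamma_i$ and $\dot\gamma_i$ in \cref{eq:diffeo-recursion} remain bounded. I would therefore either upgrade \cref{def:observability} to uniform complete observability (observability Gramian bounded above and below) or carry it as a standing regularity assumption; with that in hand the sandwich bound holds and hence $x_e \to 0$ exponentially at rate $\lambda$. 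The remaining ingredients — observability of the companion pair, the Finsler step, and the one-line metric transport — are routine.
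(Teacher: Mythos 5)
Your proposal is correct and takes essentially the same route the paper intends: the paper's own proof is ``omitted for brevity,'' deferring entirely to the preceding discussion (transformation to the observable canonical form \cref{eq:ocf}, feasibility of \cref{eq:contraction-observer} via Finsler's lemma, and pull-back of the constant metric $P$ through $\Upsilon(t)$), which is exactly what you write out, including the correct reading $A_o^\top P + PA_o - \rho\,C_o^\top C_o \preceq -2\lambda P$ of the paper's condition. Your closing observation --- that pointwise full rank of $\mathcal{O}(t)$ does not by itself give the uniform bounds $\ubar{\alpha} I_n \preceq M(t) \preceq \bar{\alpha} I_n$ required by \cref{eq:contraction}, so one needs uniform boundedness of $\Upsilon(t)$ and $\Upsilon(t)^{-1}$ (e.g.\ via a uniformly conditioned observability matrix and bounded $A(t)$, $C(t)$) --- is a genuine regularity point the paper glosses over and is worth stating as a standing assumption.
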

\begin{proof}
    Omitted for brevity but follows from above.
\end{proof}

Based on the discussion above, the natural first step in designing an observer is to check if the observability matrix in \cref{def:observability} is full rank. 
The following proposition shows that the translation kinematics are uniformly observable for any time-varying rotation matrix $R(t)$. 
\begin{proposition}
\label{prop:trans-observable}
The time-varying translation kinematics
    \begin{equation*}
           \left[ \begin{array}{c} \dot{p} \\ \dot{v} \\ \prescript{a}{}{\dot{b}} \end{array} \right] = \left[ \begin{array}{ccc} 0 & I_3 & 0 \\ 0 & 0 & -{R}(t) \\ 0 & 0 & 0 \end{array} \right] \left[ \begin{array}{c} p \\ v \\ \prescript{a}{}{b} \end{array} \right] + \left[ \begin{array}{c} 0 \\ R(t) a_m - g \\ 0 \end{array} \right],
    \end{equation*}
with the measurement model $y = p$ and any time-varying rotation matrix ${R}(t)$ is uniformly observable.
\end{proposition}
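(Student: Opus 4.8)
The plan is to compute the observability matrix $\mathcal{O}(t)$ of \cref{eq:unif_observ} directly and show that it already attains full column rank $9$ using only its first three block rows, so that no excitation condition on $R(t)$ is required. Write $A(t)$ for the $9\times 9$ system matrix and $C = [\,I_3~0~0\,]$ for the output matrix of the stated translation kinematics, and recall the matrix Lie derivative convention $L_A C = \dot{C} + C A$. Since $C$ is constant, $\dot{C}=0$, so $L_A C = C A = [\,0~I_3~0\,]$. Differentiating once more, $L_A^2 C = \tfrac{d}{dt}(L_A C) + (L_A C)\,A = [\,0~I_3~0\,]\,A = [\,0~0~-R(t)\,]$, where the derivative term again vanishes because $[\,0~I_3~0\,]$ is constant.

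Stacking these first three block rows of $\mathcal{O}(t)$ gives the $9\times 9$ block‑triangular matrix
\begin{equation*}
    \begin{bmatrix} C \\ L_A C \\ L_A^2 C \end{bmatrix} = \begin{bmatrix} I_3 & 0 & 0 \\ 0 & I_3 & 0 \\ 0 & 0 & -R(t) \end{bmatrix},
\end{equation*}
whose determinant equals $\det\!\big(-R(t)\big) = (-1)^3 \det R(t) = -1 \neq 0$ for every $t$, since $R(t)\in\mathbb{SO}(3)$. Hence this submatrix is invertible for all $t$, which forces $\mathcal{O}(t)$ to have rank $9$ for all $t$. By \cref{def:observability}, the translation kinematics are therefore uniformly observable for any $R(t)$, and the remaining higher‑order Lie derivatives $L_A^3 C,\dots,L_A^8 C$ need never be evaluated.

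I expect the only real obstacle to be notational bookkeeping — tracking the matrix Lie‑derivative recursion and the block dimensions carefully — rather than anything substantive. The conceptual point, worth stating explicitly, is that the accelerometer bias enters the velocity dynamics through the \emph{full‑rank} map $-R(t)$, so differentiating the position measurement twice exposes $v$ and a third differentiation exposes $\prescript{a}{}{b}$, regardless of the orientation trajectory; this is precisely why uniform observability holds without any persistency‑of‑excitation assumption. This is also what makes the hierarchical construction in the remainder of the section work: whatever continuous $R(t)$ the orientation observer \cref{eq:geo-att} produces, the translation subsystem stays uniformly observable, so \cref{lemma:contract-observability} applies and yields a contracting translation observer with explicit metric and gain.
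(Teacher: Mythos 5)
Your proof is correct and follows essentially the same route as the paper: compute $C$, $L_A C = [\,0~I_3~0\,]$, and $L_A^2 C = [\,0~0~-R(t)\,]$, stack them into a block-triangular $9\times 9$ matrix with determinant $-1$, and conclude full rank for all $t$. The only cosmetic difference is that you explicitly justify stopping at the first three block rows (since additional rows cannot reduce rank), whereas the paper handles this implicitly by treating the Lie-derivative recursion at the block level "with a slight abuse of notation."
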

\begin{proof}
    Let 
    \begin{equation*}
        A(t) = \left[ \begin{array}{ccc} 0 & I_3 & 0 \\ 0 & 0 & -{R}(t) \\ 0 & 0 & 0 \end{array} \right], ~~~ C = [\,I_3~~0~~0\,].
    \end{equation*}
    Noting that $A(t)$ is a block matrix, then, with a slight abuse of notation, $L_A C = \dot{C} + C A = [\,0~~I_3~~0\,]$ and $L_A^2 C = \tfrac{d}{dt}(L_A C) + L_A C A = [\,0~~0~-{R}(t)\,]$ so
    \begin{equation*}
        \mathcal{O}(t) = \left[ \begin{array}{ccc} I_3 & 0 & 0 \\ 0 & I_3 & 0 \\ 0 & 0 & -{R}(t) \end{array} \right].
    \end{equation*}
    Since $\mathcal{O}(t)$ is block diagonal and $\det(R) = 1$ for any rotation matrix, then $\det(\mathcal{O}(t)) = -1$ so $\mathcal{O}(t)$ is full rank and the translation dynamics with position measurements is uniformly observable.
\end{proof}

With uniform observability established in \cref{prop:trans-observable} for the translation kinematics, a contracting observer can be constructed such that the estimation error will converge to zero exponentially.
\begin{theorem}
\label{thm:geo-trans-true}
    Let the true position be $p$ and the position estimation error be $p_e \triangleq p - \hat{p}$.
    The position, linear velocity, and accelerometer bias estimates will exponentially converge to their true values with the observer
    \begin{equation}
    \label{eq:geo-trans-true}
        \begin{aligned}
            \dot{\hat{p}} &= \hat{v} + K_3 p_e \\
            \dot{\hat{v}} &= R(t)[a_m-\prescript{a}{}{}{\hat{b}}] - g \\
            &\hphantom{=} + [K_2 + K_3 \, R(t) \, \Omega(t) \, R(t)^\top] p_e \\
            \prescript{a}{}{}\dot{\hat{b}} &= -[K_1 + K_2\Omega(t) + K_3 (\Omega(t)^2 - \dot{\Omega}(t))] {R}(t)^\top p_e ,
        \end{aligned}
    \end{equation}
    where each $K_i$ is a diagonal matrix with positive entries, $R(t)$ is the true time-varying rotation matrix, and $\Omega(t) \triangleq [\omega]_\times$ is the true time-varying skew symmetric matrix formed with the true angular velocity vector $\omega$. 
    Furthermore, the observer \cref{eq:geo-trans-true} yields contracting estimation error dynamics.
\end{theorem}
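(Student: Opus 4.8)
The plan is to show that \cref{eq:geo-trans-true} is nothing but the contracting observer of \cref{lemma:contract-observability} applied to the uniformly observable system of \cref{prop:trans-observable}, written back in the original $(p,v,\prescript{a}{}{b})$ coordinates. Since \cref{prop:trans-observable} already establishes uniform observability with $y=p$ for any $R(t)$, \cref{lemma:contract-observability} immediately furnishes a metric $P\in\mathcal{S}^{9}$ and $\rho\in\mathbb{R}_{>0}$ satisfying \cref{eq:contraction-observer} and an observer of the form \cref{eq:x-contract-observer} that is contracting at rate $\lambda$ in $M(t)=\Upsilon(t)^\top P\,\Upsilon(t)$. Because the error dynamics in canonical coordinates, $\dot z_e=(A_o-K_oC_o)z_e$, are in block-companion form, the canonical gain $K_o$ can be chosen so that the three $3\times 3$ blocks $K_1,K_2,K_3$ are diagonal with positive entries (a decoupled pole placement / decoupled solution of \cref{eq:contraction-observer}), matching the hypothesis. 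What then remains is (i) to compute $\Upsilon(t)^{-1}$ explicitly so that the gain $K(t)=\Upsilon(t)^{-1}K_o$ reproduces the correction terms in \cref{eq:geo-trans-true}, and (ii) to promote ``contracting error dynamics'' to exponential convergence of the estimates to the true states.

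For (i), I would run the recursion \cref{eq:diffeo-recursion} starting from the observability matrix $\mathcal{O}(t)$ found in \cref{prop:trans-observable}. Partitioning $\Upsilon(t)^{-1}=[\,\gamma_1(t)~\gamma_2(t)~\gamma_3(t)\,]$ into $9\times 3$ blocks, the first block is $\gamma_1=\mathcal{O}(t)^{-1}[\,0~~0~~I_3\,]^\top=[\,0~~0~~-R(t)^\top\,]^\top$, and then $\gamma_2=A(t)\gamma_1-\dot\gamma_1$ and $\gamma_3=A(t)\gamma_2-\dot\gamma_2$, where the differentiations use the rigid-body identity $\dot R=R\,[\omega]_\times=R\,\Omega$, equivalently $\dot R^\top=-\Omega R^\top$. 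This yields closed-form blocks built from $R^\top$, $\Omega R^\top$, $R\Omega R^\top$, and $(\Omega^2-\dot\Omega)R^\top$. Substituting $K(t)=\Upsilon(t)^{-1}K_o$ into \cref{eq:x-contract-observer}, using $y-\hat y=p-\hat p=p_e$, and noting that the non-correction part is just the plant with $\hat x$ in place of $x$, the three block components of the correction $K(t)p_e$ collapse to the $K_3p_e$, $[K_2+K_3R\Omega R^\top]p_e$, and $-[K_1+K_2\Omega+K_3(\Omega^2-\dot\Omega)]R^\top p_e$ terms of \cref{eq:geo-trans-true}; the only care needed here is tracking transposes and the left/right order of the $K_i$ in the $R$-dependent blocks. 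Contraction of the error dynamics is then immediate from \cref{lemma:contract-observability}.

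For (ii), contraction at rate $\lambda$ in $M(t)=\Upsilon(t)^\top P\,\Upsilon(t)$ gives $\|x_2(t)-x_1(t)\|\le\sqrt{\bar\alpha/\ubar\alpha}\,\|x_2(0)-x_1(0)\|e^{-\lambda t}$ whenever $\ubar\alpha I_9\preceq M(t)\preceq\bar\alpha I_9$ uniformly; taking $x_1$ as the true translation trajectory (a solution of the plant ODE) and $x_2=\hat x$ then gives exponential convergence of $(\hat p,\hat v,\prescript{a}{}{\hat b})$ to $(p,v,\prescript{a}{}{b})$. The uniform two-sided bound on $M(t)$ is the one place the argument is not routine: it requires $\Upsilon(t)$ and $\Upsilon(t)^{-1}$ to be uniformly bounded, which from the closed forms above holds as soon as the true angular velocity $\omega$ and its derivative $\dot\omega$ are bounded (so that $\Omega$, $\Omega^2$, $\dot\Omega$ are bounded; $R$ is always bounded, and $\det\Upsilon(t)^{-1}=\pm 1$ keeps $\Upsilon(t)$ bounded as well). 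Under that mild, physically natural regularity assumption on the trajectory the stated exponential convergence follows. I expect the explicit computation of $\Upsilon(t)^{-1}$ and the verification of this uniform metric bound to be the only non-mechanical parts of the proof.
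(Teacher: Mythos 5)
Your proposal follows essentially the same route as the paper: invoke \cref{prop:trans-observable} for uniform observability, apply \cref{lemma:contract-observability} to get the contracting observer with gain $K(t)=\Upsilon(t)^{-1}K_o$, and compute $\Upsilon(t)^{-1}$ via the recursion \cref{eq:diffeo-recursion} using $\dot{R}=R\,\Omega$, which reproduces exactly the block form the paper writes down. Your additional remark that the uniform two-sided bound on $M(t)=\Upsilon(t)^\top P\,\Upsilon(t)$ requires bounded $\omega$ and $\dot{\omega}$ is a point the paper leaves implicit, and is a worthwhile clarification rather than a deviation.
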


\begin{proof}
Since the translation kinematics with position measurements is uniformly observable via \cref{prop:trans-observable}, there exists a $P$ and $\rho$ that satisfy the contraction condition \cref{eq:contraction-observer} for a rate $\lambda$. 
Due to the structure of $A_o$, one can show $K_o = \frac{1}{2} \rho P^{-1} C_o^\top = [K_1~K_2~K_3]^\top$ where $K_i = k_i \, I_3$ with $k_i > 0$.
From \cref{lemma:contract-observability}, the observer with gain $K(t) = \Upsilon(t)^{-1} K_o$ yields contracting estimation error dynamics where $\Upsilon(t)^{-1}$ takes the form
\begin{equation*}
    \Upsilon(t)^{-1} = \left[ \begin{array}{ccc}
         0 & 0 & I  \\
         0 & I & R(t)\Omega(t)R(t)^\top \\
         -R(t)^\top & -\Omega(t)R(t)^\top & -\Gamma(t) {R}(t)^\top
    \end{array} \right],
\end{equation*}
with $\Gamma(t) = \Omega(t)^2 - \dot{\Omega}(t)$.
Making the appropriate substitutions, the observer in \cref{eq:geo-trans-true} is obtained.
Since \cref{eq:geo-trans-true} yields contracting estimation error dynamics then $(p_e,\,v_e,\,\prescript{a}{}{b}_e) \rightarrow (\vec{0},\,\vec{0},\,\vec{0})$ at rate $\lambda$.
Therefore, the $\hat{p}$, $\hat{v}$, and $\prescript{a}{}{}\hat{b}$ converge to their true values exponentially.
\end{proof}

Conceptually, the presence of $\omega$ and $\dot{\omega}$ in \cref{eq:geo-trans-true} is a consequence of the observer having to compensate for the time-varying nature of the rotation matrix $R$.
In essence, these are \emph{anticipation terms} that are necessary for the error dynamics to be contracting.
Despite the strong convergence result, the observer in \cref{eq:geo-trans-true} uses the true orientation $R$ and angular velocity $\omega$ of the system which are generally not available. 
However, if $R$ and $\omega$ are instead replaced with their estimates generated by the orientation observer \cref{eq:geo-att} then the translation states will still converge to their true values, as shown in the following theorem.

\begin{theorem}
\label{thm:geo-trans}
    Let the true position be $p$ and the position estimation error be $p_e = p - \hat{p}$.
     The position, linear velocity, and accelerometer bias estimates will converge to their true values with the observer
    \begin{equation}
    \label{eq:geo-trans}
        \begin{aligned}
            \dot{\hat{p}} &= \hat{v} + K_3 p_e \\
            \dot{\hat{v}} &= \hat{R}\,[a_m-\prescript{a}{}{}{\hat{b}}] - g + [K_2 + K_3 \, \hat{R} \, \hat{\Omega} \, \hat{R}^\top] p_e \\
            \prescript{a}{}{}\dot{\hat{b}} &= -[K_1 + K_2\hat{\Omega} + K_3 (\hat{\Omega}^2 - \dot{\hat{\Omega}})] \hat{R}^\top p_e ,
        \end{aligned}
    \end{equation}
    where each $K_i$ is a diagonal matrix with positive entries, $\hat{R}$ (formed from $\hat{q}$) and $\hat{\Omega} = [\,\omega_m - \prescript{g}{}{}\hat{b}\,]_\times$ are the orientation and gyroscope bias estimates generates by the observer \cref{eq:geo-att}. 
\end{theorem}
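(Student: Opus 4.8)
The plan is to view \cref{eq:geo-trans} as the contracting observer of \cref{thm:geo-trans-true} driven by a vanishing perturbation coming from the orientation-estimation error, and then to exploit the robustness of contracting dynamics to exogenous inputs. Collect the translation states and their estimates into $\xi \triangleq (p,v,\prescript{a}{}{b})$ and $\hat{\xi} \triangleq (\hat{p},\hat{v},\prescript{a}{}{}\hat{b})$, and let $\xi_e \triangleq (p_e,v_e,\prescript{a}{}{b}_e) = \xi - \hat{\xi}$. Subtracting \cref{eq:geo-trans} from the true translation kinematics, and adding and subtracting the right-hand side of \cref{eq:geo-trans-true} evaluated at the \emph{same} $\hat{\xi}$, yields
\begin{equation*}
    \dot{\xi}_e = (A(t) - K(t)C)\,\xi_e + \delta(t),
\end{equation*}
where $\dot{\xi}_e = (A(t)-K(t)C)\,\xi_e$ is precisely the error system of \cref{thm:geo-trans-true} --- contracting at rate $\lambda$ in the metric $M(t) = \Upsilon(t)^\top P \Upsilon(t)$ --- and $\delta(t)$ collects every term in which $R,\Omega,\dot{\Omega}$ got replaced by $\hat{R},\hat{\Omega},\dot{\hat{\Omega}}$. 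By construction each summand of $\delta$ is a product of one of the orientation-error quantities $\hat{R}-R$, $\hat{\Omega}-\Omega$, $\dot{\hat{\Omega}}-\dot{\Omega}$ with a factor built from $a_m$, $\prescript{a}{}{}\hat{b}$, $p_e$, and bounded functions of $\omega$.

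The next step is to bound $\delta$. By \cref{thm:att-obs,thm:qe_contracting} the orientation errors vanish: $\hat{R}\to R$; $\hat{\Omega}\to\Omega$ because $\prescript{g}{}{}\hat{b}\to\prescript{g}{}{b}$; and $\dot{\Omega}-\dot{\hat{\Omega}} = [\,\prescript{g}{}{}\dot{\hat{b}}\,]_\times \to 0$ by \cref{eq:geo-bg} since $\vec{q}_e\to 0$. Hence there is a scalar signal $\varepsilon(t)\to 0$ dominating $\|\hat{R}-R\| + \|\hat{\Omega}-\Omega\| + \|\dot{\hat{\Omega}}-\dot{\Omega}\|$. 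Using the (physically inherent) boundedness of $a_m$, $\omega$, $\dot{\omega}$ and of the true trajectory $\xi$, together with $\|p_e\|\le\|\xi_e\|$, this gives $\|\delta(t)\| \le \varepsilon(t)\,(c_0 + c_1\|\xi_e\|)$ for constants $c_0,c_1\ge 0$, the $c_1$-term arising from the pieces of $\delta$ proportional to $p_e$.

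The robustness argument then closes the proof. Under the same boundedness assumptions (already implicit in \cref{thm:geo-trans-true}) $\Upsilon(t)^{\pm 1}$ are bounded, so $\ubar{\alpha}I\preceq M(t)\preceq\bar{\alpha}I$ for some $0<\ubar{\alpha}\le\bar{\alpha}$. Differentiating $\|\xi_e\|_M \triangleq (\xi_e^\top M \xi_e)^{1/2}$ along the perturbed error dynamics, applying the contraction inequality \cref{eq:contraction}, and using the bound on $\delta$ gives
\begin{equation*}
    \tfrac{d}{dt}\|\xi_e\|_M \le -(\lambda - \kappa\,\varepsilon(t))\,\|\xi_e\|_M + \sqrt{\bar{\alpha}}\,c_0\,\varepsilon(t), \qquad \kappa \triangleq c_1\sqrt{\bar{\alpha}/\ubar{\alpha}}.
\end{equation*}
Since $\varepsilon(t)\to 0$, pick $T$ with $\kappa\varepsilon(t)\le\lambda/2$ for $t\ge T$; on $[0,T]$ the coefficients are bounded so $\|\xi_e\|_M$ cannot escape in finite time, while for $t\ge T$ the inequality becomes $\tfrac{d}{dt}\|\xi_e\|_M \le -\tfrac{\lambda}{2}\|\xi_e\|_M + \sqrt{\bar{\alpha}}\,c_0\,\varepsilon(t)$, a stable scalar system forced by a vanishing input, hence $\|\xi_e\|_M\to 0$ and $\xi_e\to 0$. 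This yields $\hat{p}\to p$, $\hat{v}\to v$, and $\prescript{a}{}{}\hat{b}\to\prescript{a}{}{b}$.

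I expect the last step to be the main obstacle: in contrast to \cref{thm:geo-trans-true}, the perturbation $\delta$ is not truly exogenous --- it scales with the estimation error $\xi_e$ itself through the $p_e$-proportional terms --- so contraction alone does not suffice, and one must (i) show that the perturbation gain $\kappa\varepsilon(t)$ is eventually dominated by the contraction rate $\lambda$, which is exactly where the decay of the orientation observer \cref{eq:geo-att} is indispensable, and (ii) separately rule out finite escape over the transient interval. A secondary subtlety is verifying the uniform bounds on $M(t)$, i.e.\ that $\Upsilon(t)$ and $\Upsilon(t)^{-1}$ stay bounded, which requires $\omega$ and $\dot{\omega}$ to be bounded.
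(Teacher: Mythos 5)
Your proposal follows the same route as the paper's proof --- decompose \cref{eq:geo-trans} into the nominal contracting error dynamics of \cref{thm:geo-trans-true} plus a perturbation generated by the orientation-estimation error, then invoke robustness of contraction to vanishing inputs --- but your execution is more careful on a point the paper elides. The paper asserts that the extra terms ``can be viewed as exogenous inputs that tend to zero,'' yet several of them (e.g., the difference $K_3(\hat{R}\hat{\Omega}\hat{R}^\top - R\Omega R^\top)p_e$, the analogous terms in the bias update, and the piece of $(\hat{R}-R)[a_m-\prescript{a}{}{}\hat{b}]$ proportional to the accelerometer-bias error) multiply components of the translation error itself; they are not exogenous, and they vanish only if $\xi_e$ stays bounded, which is part of what must be proved. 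Your bound $\|\delta\|\le\varepsilon(t)(c_0+c_1\|\xi_e\|)$, the observation that the effective contraction rate $\lambda-\kappa\,\varepsilon(t)$ is eventually bounded below by $\lambda/2$, and the explicit exclusion of finite escape on the transient interval together close exactly this gap; this is the standard vanishing-gain comparison argument that the paper's one-line appeal to ``a contracting system driven by inputs tending to zero'' actually requires. The price is a set of boundedness hypotheses ($a_m$, $\omega$, $\dot{\omega}$, the true trajectory, and $\Upsilon(t)^{\pm 1}$, hence the uniform bounds on $M(t)$) that the paper leaves implicit; these are physically reasonable and are already needed for \cref{thm:geo-trans-true} to be meaningful, so stating them is a feature rather than a defect. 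In short: same decomposition and same appeal to contraction robustness, but your version supplies the analysis that makes the state-dependent part of the perturbation harmless.
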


\begin{proof}
The observer \cref{eq:geo-trans} can be put into the same form as \cref{eq:geo-trans-true} but with additional terms that depend on $R_e$ and $\prescript{g}{}{b}_e$ where $q_e = \hat{q}^* \otimes q \implies R_e = \hat{R}^\top R$ and $\prescript{g}{}{b}_e = \prescript{g}{}{b} - \prescript{g}{}{}\hat{b}$.
These additional terms can be viewed as exogenous inputs that tend to zero since the orientation observer \cref{eq:geo-att} guarantees $(\vec{q}_e,\,\prescript{g}{}{b}_e) \rightarrow (\vec{0},\,\vec{0})$ as $t \rightarrow +\infty$.
Furthermore, since \cref{thm:geo-trans-true} establishes the observer \cref{eq:geo-trans-true} is contracting, 
the observer \cref{eq:geo-trans} can be viewed as a contracting system with exogenous inputs that tend to zero.
Since a contracting system will tend towards its nominal output when driven by inputs tending to zero, then $(p_e,\,v_e,\,\prescript{a}{}{b}_e) \rightarrow (\vec{0},\,\vec{0},\,\vec{0})$ as $t \rightarrow +\infty$.
Therefore, $\hat{p}$, $\hat{v}$, and $\prescript{a}{}{}\hat{b}$ will converge to their true values.
\end{proof}

\begin{corollary}
The orientation and translation state estimates will converge to their true values with the hierarchical observer given by \cref{eq:geo-att,eq:geo-trans}.
\end{corollary}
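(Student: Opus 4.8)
The plan is to chain together the convergence results already established, exploiting the strictly one-directional (cascade) structure of \cref{eq:system}. First I would invoke \cref{thm:att-obs}: the orientation observer \cref{eq:geo-att} is self-contained, since its right-hand side depends only on $\omega_m$, $\hat q$, and $\prescript{g}{}{}\hat b$ and on no translation state. Hence running the full hierarchical observer changes nothing about the orientation layer, and \cref{thm:att-obs} applies verbatim to give $(\vec q_e,\,\prescript{g}{}{b}_e)\to(\vec 0,\,\vec 0)$ globally as $t\to+\infty$, i.e. $\hat q\to q$ (equivalently $R_e=\hat R^\top R\to I_3$) and $\prescript{g}{}{}\hat b\to\prescript{g}{}{b}$; by \cref{thm:qe_contracting} the quaternion part in fact contracts at rate $c_1$ to a ball that shrinks with $\|\prescript{g}{}{b}_e\|$, so the first phase of convergence is exponential.

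Second, I would feed these converging orientation estimates into the translation layer via \cref{thm:geo-trans}. With $\hat R$ and $\hat\Omega$ supplied by \cref{eq:geo-att}, the observer \cref{eq:geo-trans} is exactly the contracting observer \cref{eq:geo-trans-true} of \cref{thm:geo-trans-true} perturbed by exogenous inputs built from $R_e$, $\prescript{g}{}{b}_e$, and their derivatives. Each such input is continuous, bounded, and vanishes as $t\to+\infty$: the orientation errors tend to zero by the previous step, $\hat\Omega-\Omega=[\prescript{g}{}{b}_e]_\times$ vanishes, and the anticipation term $\dot{\hat\Omega}=-[\prescript{g}{}{}\dot{\hat b}]_\times$ is bounded (and vanishing) because $\prescript{g}{}{}\dot{\hat b}$ in \cref{eq:geo-bg} is bounded by $c_2$. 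Applying the standard property that a contracting system driven by an input tending to zero has its solution tend to the nominal (input-free) solution then yields $(p_e,\,v_e,\,\prescript{a}{}{b}_e)\to(\vec 0,\,\vec 0,\,\vec 0)$ as $t\to+\infty$. Combining the two layers, every estimate $\hat q,\ \prescript{g}{}{}\hat b,\ \hat p,\ \hat v,\ \prescript{a}{}{}\hat b$ generated by \cref{eq:geo-att,eq:geo-trans} converges to its true value, which is the claim.

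The only non-bookkeeping point, and hence the main obstacle to state carefully, is the validity of the cascade argument: one must confirm there is genuinely no feedback from the translation states into the orientation observer (so \cref{thm:att-obs} is unaffected), and that the vanishing-input property for contracting systems is invoked for inputs that are not merely bounded but actually converge to zero. The hierarchy of \cref{eq:system} secures the former, while \cref{thm:att-obs} together with boundedness of $\prescript{g}{}{}\dot{\hat b}$ secures the latter; no finite-escape-time or peaking pathology arises because the driven subsystem is contracting, hence forward-complete with a uniformly bounded incremental gain. Everything else is substitution into the forms already derived in \cref{thm:geo-trans-true,thm:geo-trans}.
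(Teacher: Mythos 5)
Your proposal is correct and follows essentially the same route as the paper, which simply chains \cref{thm:att-obs} (orientation layer converges independently, by the cascade structure) with \cref{thm:geo-trans} (translation layer converges once its orientation-dependent perturbations vanish). The additional detail you supply about vanishing exogenous inputs and forward-completeness is really the content of the proof of \cref{thm:geo-trans} itself, so nothing new is needed beyond citing the two theorems.
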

\begin{proof}
    Follows from \cref{thm:att-obs,thm:geo-trans}.
\end{proof}

\begin{remark}
\cref{eq:geo-trans} depends on $\dot{\omega}_m$ which is not directly available. 
However, one could numerically differentiate $\omega_m$ and apply a low-pass filter to reduce noise, or implement another observer / filter that treats $\omega_m$ as an external measurement.
\end{remark}





\section{SIMULATION RESULTS}
\label{sec:results}

The proposed observer was tested in simulation where synthetic IMU measurements were corrupted with static accelerometer and gyroscope bias. 
The system's true position and orientation was generated by numerically integrating the true accelerometer and gyroscope measurements $a = [ \sin(t)~2\sin(0.1\,t)~0.3 ]^\top$ m/s$^2$ and $\omega = [\sin(2\,t)~-\sin(4\,t)~2\sin(t)]^\top$ rad/s, respectively. 
Note that the true position and orientation are time-varying so the observer must converge to a trajectory rather than a constant position and orientation.
The initial state of the observer was initialized to a random value in order to showcase the observer's strong convergence properties.
The translation observer gain was computed numerically by formulating the search for the positive definite matrix $P$ and strictly-positive scalar $\rho$ as a linear matrix inequality that minimized the condition number of $P$ with \cref{eq:contraction-observer} as a constraint.  
All simulation parameters can be found in \cref{table:params} of the \nameref{sec:appendix}.

The observer's performance was quantified by evaluating the convergence behavior of $\|\vec{q}_e\|$ (shown in \cref{fig:qe}) and the norm of the translation estimation error $x_e \triangleq [p_e~v_e~\prescript{a}{}{b_e}]^\top$ with metric $M(t)$, i.e., $x_e^\top M(t) x_e$ (shown in \cref{fig:E}). 
Firstly, we see that $\|\vec{q}_e\|$ initially exhibits exponential convergence to $\tfrac{\prescript{g}{}{}{\bar{b}_e}}{2c_1}$ followed by asymptotic convergence to zero.
The black vertical line indicates the first instance when $\|\vec{q}_e\| \leq \tfrac{\prescript{g}{}{}{\bar{b}_e}}{2c_1}$.
This behavior confirms the analysis of the proposed orientation observer.
Secondly, we see that the translation estimation error $x_e$ converges to zero even when using the orientation estimate generated by the orientation observer.
In particular, after the initial transients attributed to the initial exponential convergence of $\|\vec{q}_e\|$, the translation estimation error exhibits nearly exponential convergence. 
This follows from the analysis in \cref{sub:translation} where we only established the translation estimation error will tend toward zero as the orientation estimate also tends to zero. 
Nonetheless, the estimation error converges very quickly to zero, as desired. 

\begin{figure}[t!]
\centering
    \begin{subfigure}{.85\columnwidth}
         \centering
         \includegraphics[trim=100 10 120 10, clip, width=1\textwidth]{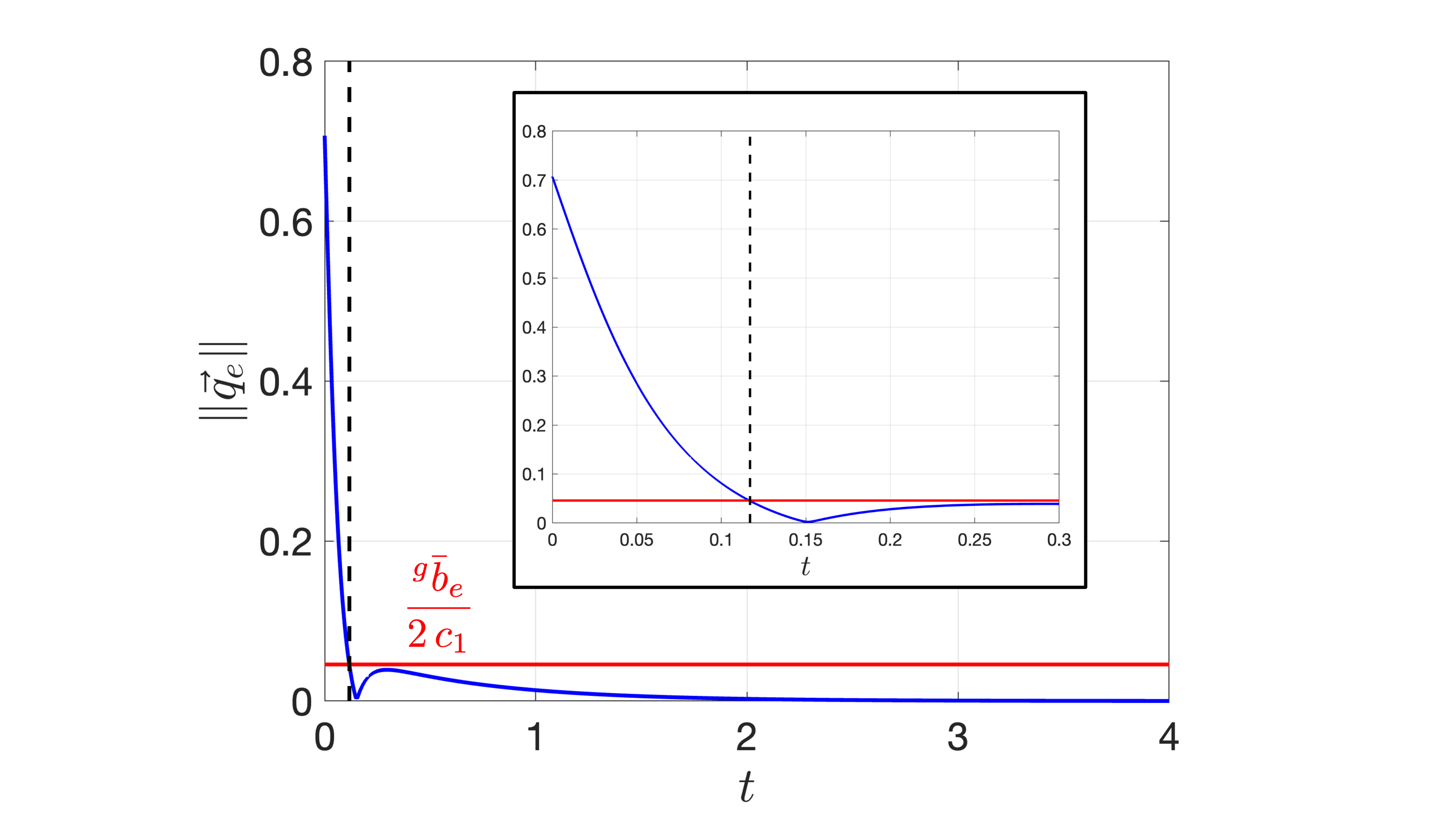}
         \caption{2-norm of $\vec{q}_e$.}
         \label{fig:qe}
     \end{subfigure}
     \\
     \begin{subfigure}{.85\columnwidth}
         \centering
         \includegraphics[ width=1\textwidth]{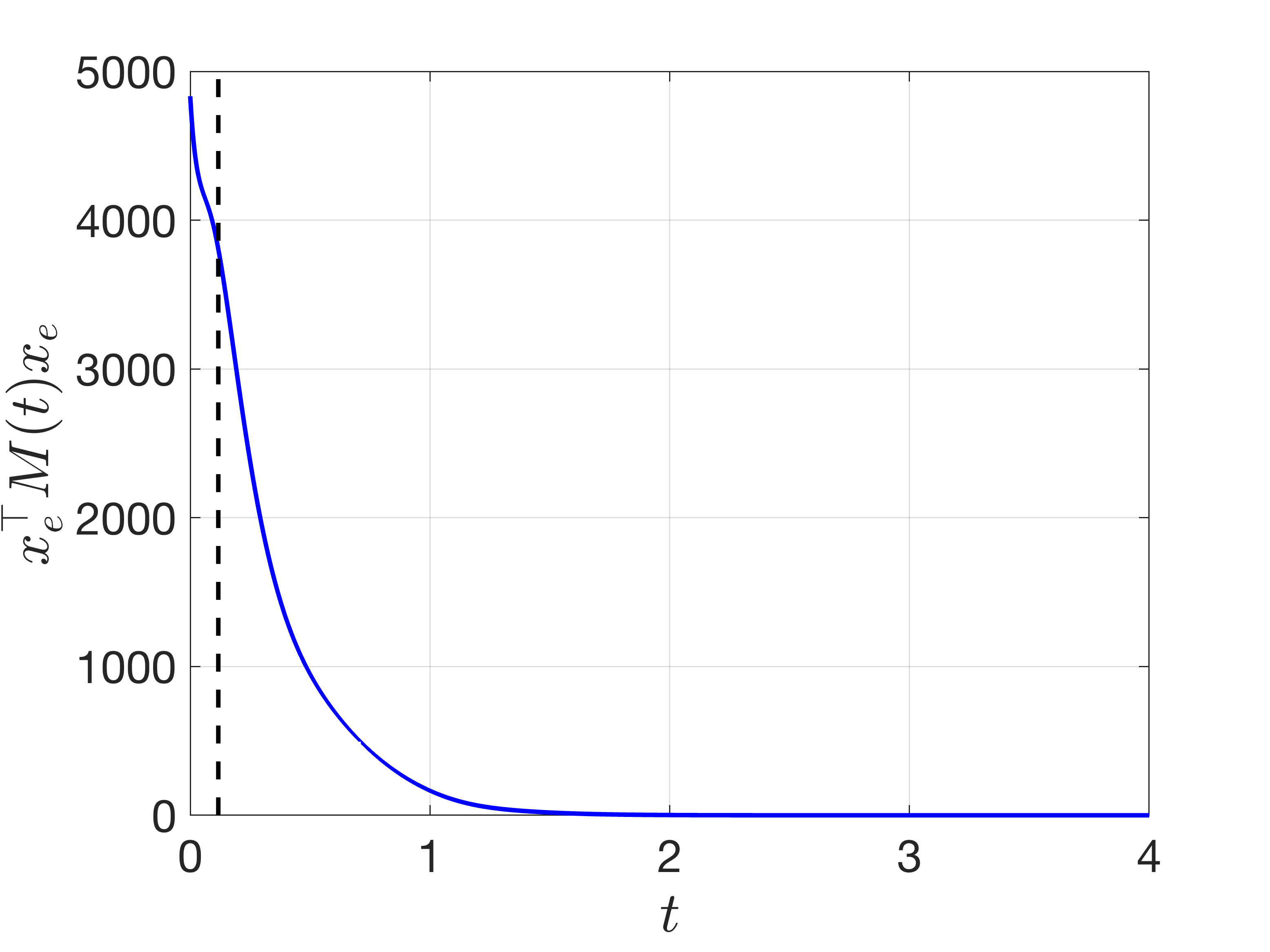}
         \caption{Norm of translation estimation error $x_e$ with metric $M$.}
         \label{fig:E}
     \end{subfigure}
    \label{fig:observer-results}
    \caption{Convergence of the orientation and translation estimation error with the proposed hierarchical observer.}
    \vskip -0.25in
\end{figure}


\section{DISCUSSION}
\label{sec:conclusion}
This work presented a new technique based on contraction analysis to estimate the position, orientation, linear velocity, and IMU bias of a mobile systems equipped with an IMU and other sensors.
The approach relies on the availability of position and orientation measurements generated by an upstream algorithm that uses vision or LiDAR to generate an accurate pose measurement.
The developed approach fuses these pose measurements with high-rate IMU measurements to generate a smooth state estimate --- suitable for control and path planning --- that is guaranteed to converge to its true value.
The approach takes a hierarchical structure where the output of the orientation observer serves as an input to the translation observer.
Future work includes a thorough analysis of the approach when stochastic noise is present in the IMU or pose measurements, and when pose measurements are intermittent. 
Additionally, an initial version of the observer appeared in \cite{chen2022direct} and was shown to perform well in hand-carried experiments.
Extensive hardware experiments including aggressive closed-loop flights on a custom quadrotor will be conducted.


\section*{Appendix}
\label{sec:appendix}
\subsection{Quaternions}
A unit quaternion $q$ is a four vector that lives on the unit three sphere, i.e., $q \in \mathbb{S}^3$, and is composed of a real part $q^\circ$ and vector part $\vec{q}$ such that $ q  = (\,q^\circ,\vec{q}\,)$.
Given two quaternions $p$ and $q$, the quaternion product is
\begin{equation*}
    p \otimes q = \left[\begin{array}{c}
         p^\circ q^\circ - \vec{p}^\top \vec{q} \\ p^\circ\vec{q} + q^\circ\vec{p} + \vec{p} \times \vec{q}
    \end{array}\right].
\end{equation*}
The inverse of quaternion $q$ is its conjugate $q^* \triangleq (\,q^\circ,-\vec{q}\,)$ which satisfies $q \otimes q^* = q^* \otimes q = (\,1,\,\vec{0}\,)$.
A qauternion $q$ can also be converted to rotation matrix $R$, and vice versa, which we denote as $R(q)$.

\subsection{Simulation Parameters}
\begin{table}[!h]
    \centering
    \footnotesize
    \setlength{\tabcolsep}{10 pt}
    \renewcommand{\arraystretch}{1.25}
    \caption{Simulation Parameters.}
    \begin{tabular}{|c|c|}
        \hline
        \textbf{Parameter} & \textbf{Value} \\
        \hline 
        $q(0)$ & [0.7071~0~0.7071~0]$^\top$ \\ \hdashline
        $\hat{q}(0)$ & [1~0~0~0]$^\top$ \\ \hdashline
        $p(0)$ & [0~0~0]$^\top$ m \\ \hdashline
        $\hat{p}(0)$ & [1.68~-1.94~2.01]$^\top$ m \\ \hdashline
        $v(0)$ & [0~0~0]$^\top$ m/s \\ \hdashline
        $\hat{v}(0)$ & [-4.35~1.51~2.44]$^\top$ m/s \\ \hdashline
        $\prescript{g}{}{b}$ & [0.1~-0.02~0.05]$^\top$ rad/s \\ \hdashline
        $\prescript{g}{}{}\hat{b}(0)$ & [0~0~0]$^\top$ rad/s \\ \hdashline
        $\prescript{g}{}{}\bar{b}_e$ & 1.83 rad/s \\ \hdashline
        $\prescript{a}{}{b}$ & [-0.1~0.4~0.2]$^\top$ m/s$^2$ \\ \hdashline
        $\prescript{a}{}{}\hat{b}(0)$ & [0~0~0]$^\top$ m/s$^2$ \\ \hdashline
        $c_1$ & 20 \\ \hdashline
        $c_2$ & 60 \\ \hdashline
        $\lambda$ & 2 \\ \hdashline
        $k_1$ & 64 \\ \hdashline
        $k_2$ & 48 \\ \hdashline
        $k_3$ & 12 \\ \hdashline
        $dt$ & 0.001 s \\
        \hline
    \end{tabular}
    \vspace{-2mm}
    \label{table:params}
\end{table}


\balance
\bibliographystyle{ieeetr}
\bibliography{ref}


\end{document}